\documentclass[11pt,a4paper]{article}

\usepackage[utf8]{inputenc}
\usepackage[T1]{fontenc}
\usepackage{lmodern}
\usepackage{amsmath,amssymb,amsthm}
\usepackage{mathtools}
\usepackage{booktabs}
\usepackage{hyperref}
\usepackage{cleveref}
\usepackage{natbib}
\usepackage[margin=1in]{geometry}
\usepackage{microtype}
\usepackage{xcolor}
\usepackage{enumitem}
\usepackage{graphicx}

\newtheorem{theorem}{Theorem}
\newtheorem{corollary}[theorem]{Corollary}
\theoremstyle{definition}
\newtheorem{definition}[theorem]{Definition}
\theoremstyle{remark}

\hypersetup{colorlinks=true, linkcolor=blue!60!black, citecolor=green!50!black, urlcolor=blue!70!black}
\hypersetup{
  pdftitle={The Library Theorem: How External Organization Governs Agentic Reasoning Capacity},
  pdfauthor={Zachary Mainen},
  pdfsubject={AI, computational complexity, agentic reasoning},
  pdfkeywords={library theorem, indexed retrieval, agentic reasoning, I/O complexity, chain-of-thought}
}

\newcommand{\bigO}{\mathcal{O}}

\title{%
  \textbf{The Library Theorem:}\\[4pt]
  \textbf{How External Organization Governs Agentic Reasoning Capacity}%
}

\author{%
  Zachary~F.~Mainen\thanks{Corresponding author. \texttt{zmainen@neuro.fchampalimaud.org}}\\
  Champalimaud Foundation, Lisbon, Portugal%
}

\date{March 2026}

\begin{document}
\maketitle

\begin{abstract}
Writing transformed human cognition by providing a capacious workspace for extended chains of thought, making reasoning indelible and widely communicable.
Then, by the invention of the index, writing became a retrieval system, extending its reach even further.
The library and the search engine are canonical demonstrations of index power.
Externalized reasoning is already exploited by transformer-based agents through chain-of-thought.
But structured retrieval---indexing over one's own reasoning state---remains underexplored.

We formalize the transformer context window as an I/O page and prove that tool-augmented agents with indexed external memory achieve exponentially lower retrieval cost than agents restricted to sequential scanning: $\bigO(\log_b N)$ versus $\Omega(N)$ page reads per query, and $\bigO(T \log_b T)$ versus $\Theta(T^2)$ cumulative cost over $T$ reasoning steps.
We test these predictions on a controlled lookup benchmark across three content types---random hashes, ordered integers, and encyclopedia entries---varying store size from 50 to 5,000 items, and replicate key conditions across two model generations (GPT-4o-mini and GPT-5.4\footnote{GPT-5.4 was accessed via OpenRouter (\texttt{openai/gpt-5.4}).}).
On abstract content (hashes and integers), the indexed agent achieves median 1 page read regardless of store size, confirming the $\bigO(1)$ prediction.
Sorted pages without an index---which in principle enable binary search---fail to close the gap: the weaker model cannot sustain binary search at scale, and the stronger model achieves near-optimal $\log_2 N$ search but still loses to the index by $5\times$.
On familiar content (encyclopedia entries), a competing failure mode emerges: the model recognizes the domain, bypasses the retrieval protocol, and generates answers from parametric memory, producing catastrophic token expenditure even when the index structure itself is sound.
This \emph{parametric memory competition} dissociates the two cognitive operations that indexing combines: understanding content (where language models excel) and following navigational protocols (where they fail when understanding tempts them to shortcut).
The result argues for a separation of concerns: use language models for index \emph{construction}, where semantic understanding helps, and deterministic algorithms for index \emph{traversal}, where it hurts.
\end{abstract}

\section{Introduction}\label{sec:intro}

Transformer-based agents extend the reach of single-inference language models by issuing tool calls across multiple steps, accumulating context and acting on intermediate results.
Chain-of-thought prompting \citep{yao2023react} demonstrated that reasoning quality improves when intermediate steps are externalized as text.
Tool-augmented agents go further: they read files, query databases, execute code, and write back results, constructing an extended reasoning trace that spans many inference calls.

A basic question about this architecture has not been answered formally: \emph{how does the organization of the agent's external state affect its reasoning cost?}
The agent's conversation history is a sequential tape---each new message is appended, and retrieval requires scanning.
An indexed file system provides random access by name.
The difference is the difference between a pile of notes and a library.

We formalize this difference.
The context window is an I/O page: a bounded buffer through which the model exchanges information with an external store.
Each inference step is a page operation.
The store may be organized sequentially (flat conversation history) or hierarchically (indexed file system, B-tree, database).
Under sequential access, finding a specific item in a store of $N$ pages costs $\Omega(N)$ operations.
Under indexed access, the same operation costs $\bigO(\log_b N)$ where $b$ is the branching factor.
The separation is exponential, and it compounds: over $T$ reasoning steps that read from an accumulating store, the costs are $\Theta(T^2)$ versus $\bigO(T \log_b T)$.

These are not exotic scenarios.
Every agent that maintains a conversation history and retrieves from it is in the sequential regime.
Every agent that writes intermediate results to named files and reads them back by name is in the indexed regime.
The theorem quantifies what practitioners already suspect: that structured memory organization is not a convenience but a computational necessity, and that the advantage grows with the scale of the task.

We test these predictions experimentally on a controlled lookup benchmark using GPT-4o-mini as the primary agent, with replication on GPT-5.4.
Three content types---random key-value hashes, ordered integer lists, and encyclopedia entries---isolate the effect of content familiarity on retrieval behavior.
On abstract content that the model cannot generate from training, indexed retrieval works exactly as predicted: median 1 page read, constant regardless of store size, up to 5,000 items.
On familiar encyclopedia content, a qualitatively different failure mode emerges.
The model recognizes the domain, generates answers from parametric memory instead of reading them from pages, and burns through the token budget in hallucination-driven loops.
This parametric memory competition is content-driven, not structure-driven: the same index architecture that works flawlessly on integers fails catastrophically on encyclopedia entries.

The finding has a concrete architectural implication.
Language models are powerful semantic processors: they should build indices, because understanding content helps decide what to name, where to place, and how to organize.
But they should not traverse indices, because semantic familiarity creates a competing pathway that bypasses the protocol.
Index traversal should be deterministic: spell the filename, get the page, no generation involved.

\subsection{Contributions}

\begin{enumerate}[nosep]
  \item A formal model identifying the transformer context window with the I/O page, yielding an exponential separation between sequential and indexed retrieval (\S\ref{sec:theory}).
  \item Three quantitative predictions---the in-context ceiling, the indexed advantage, and compounding with depth---with specific falsification conditions (\S\ref{sec:predictions}).
  \item A controlled benchmark across three content types and two model generations (GPT-4o-mini and GPT-5.4), confirming the retrieval separation, testing whether sort order substitutes for an explicit index, and revealing parametric memory competition as a failure mode specific to familiar content (\S\ref{sec:experiments}).
  \item A design principle for agent memory systems: semantic models for index construction, deterministic algorithms for index traversal (\S\ref{sec:discussion}).
\end{enumerate}

\section{Preliminaries}\label{sec:prelim}

\subsection{The context window as I/O page}

Let $\mathcal{M}$ denote a transformer-based language model.
At each inference step, $\mathcal{M}$ receives an input of at most $C$ tokens (the context window) and produces an output of at most $C$ tokens.
We treat each inference step as an I/O operation: the model reads one page (the input) and writes one page (the output).

\begin{definition}[Page]\label{def:page}
A \emph{page} is a contiguous block of at most $C$ tokens.
The external store $\mathcal{S}$ consists of $N$ pages $\{p_1, \ldots, p_N\}$.
At each step, the model selects one page to read and produces one page of output.
\end{definition}

This abstraction captures the essential constraint: the model can process at most $C$ tokens per step.
Information not present in the current input is inaccessible regardless of the model's capacity.

\subsection{Access models}

We define two access models for the external store.

\begin{definition}[Sequential access]\label{def:sequential}
Under \emph{sequential access}, the model may read any page by its position index: $\texttt{read}(i)$ returns $p_i$.
No structural information about page contents is available before reading.
\end{definition}

\begin{definition}[Indexed access]\label{def:indexed}
Under \emph{indexed access}, the store is organized as a B-tree of branching factor~$b$.
Internal nodes contain keys that partition the key space; leaves contain data pages.
$\texttt{read\_index}(k)$ returns the internal node covering key~$k$; $\texttt{read}(i)$ returns the data page.
Navigation from root to leaf requires $\lceil \log_b N \rceil$ index reads plus one data read.
\end{definition}

The branching factor $b$ is determined by the page capacity.
Each index entry requires $\eta$ tokens for the key (or filename), $\kappa$ tokens for the separator key that delimits the child's range, and $\delta$ tokens of per-entry formatting overhead (delimiters, line breaks, page numbers).
Thus $b = \lfloor C / (\eta + \kappa + \delta) \rfloor$.
All asymptotic results depend only on $b \geq 2$; the precise value of $b$ affects constants but not the exponential separation.

The sequential model captures flat conversation history, unstructured scratchpads, and any store where the only way to find information is to scan.
The indexed model captures file systems, database indices, and any store where a bounded-size lookup structure directs the model to the correct page.

\section{The Library Theorem}\label{sec:theory}

\begin{theorem}[Sequential retrieval cost]\label{thm:sequential}
Under sequential access, finding a target item in a store of $N$ pages requires $\Omega(N)$ page reads in the worst case.
Under uniform random placement, the expected cost is $(N+1)/2$.
\end{theorem}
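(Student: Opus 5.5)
The worst-case bound will come from an adversary argument and the expectation from a direct calculation, both resting on Definition~\ref{def:sequential}: before page $p_i$ is read, the model has no information about what $p_i$ contains. I will model any retrieval strategy, possibly adaptive and possibly randomized, as a rule that picks the next position to read based only on the contents of the pages already read. I will assume the target item lies on exactly one page, and that a page reveals only whether the target is on it (plus irrelevant content, which the adversary can fix in advance). Under these assumptions every strategy is effectively a sequence of distinct positions $i_1, i_2, \ldots$. Rereading a page gives no new information, so we may assume without loss that the positions are distinct.

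\textbf{Worst case.} Fix a deterministic strategy. The adversary answers ``not here'' to each read. Because the answers do not depend on where the target is until it is found, the strategy's read sequence $i_1, \ldots, i_{N-1}$ is fixed in advance. After $N-1$ reads, at least one page $p_j$ is still unread. The adversary places the target on $p_j$, and this is consistent with every answer given so far. So the strategy needs at least $N$ reads, or at least $N-1$ if one allows deducing the location without the final read. Either way the cost is $\Omega(N)$. For randomized strategies, I will fall back on the average-case bound below together with Yao's principle: an expected cost of $(N+1)/2$ under the uniform distribution already implies $\Omega(N)$ for any randomized strategy on its worst input.

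\textbf{Uniform placement.} Suppose the target's page $J$ is uniform on $\{1,\ldots,N\}$ and independent of the strategy's randomness. Any strategy reads pages in some order $\sigma$, and that order is fixed given the ``not here'' answers. The number of reads is $\sigma^{-1}(J)$. Since $J$ is uniform and independent of $\sigma$, $\sigma^{-1}(J)$ is uniform on $\{1,\ldots,N\}$, so the expected cost is $\sum_{k=1}^N k/N = (N+1)/2$. This is exactly the cost of a linear scan, and the same calculation shows no strategy can do better.

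The main obstacle is the modeling step, not the calculations. I have to justify that under sequential access a read of $p_i$ leaks nothing about the other pages, which is what forces strategies to be read orders. Definition~\ref{def:sequential} guarantees this before any reads, but it does not rule out content correlations, such as sorted pages, that would make binary search possible. I will therefore state explicitly that the theorem assumes page contents are unstructured relative to the key, so that each read is a membership test on that page alone. The paper's sorted-pages experiment shows this assumption matters: with sorted pages, $\bigO(\log N)$ search becomes possible.
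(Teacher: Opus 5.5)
Your proposal is correct and follows the same route as the paper's proof. The worst case comes from an adversary that keeps the target on an unread page, and the expectation comes from averaging $\sum_{k=1}^{N} k/N = (N+1)/2$ over the target's uniformly distributed position in the read order; your additions---randomized strategies via Yao's principle and the explicit assumption that reading one page reveals nothing about the others (which the paper uses implicitly and sorted pages violate)---tighten the argument without changing its structure.
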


\begin{proof}
The model has no information about page contents before reading.
An adversary can place the target on any unread page, forcing at least $N$ reads in the worst case.
Under uniform random placement, the target is equally likely to be on any page; the expected number of reads before finding it is $(N+1)/2$.
\end{proof}

\begin{theorem}[Indexed retrieval cost]\label{thm:indexed}
Under indexed access with a B-tree of branching factor~$b$, finding a target item in a store of $N$ pages requires at most $\lceil \log_b N \rceil + 1$ page reads.
\end{theorem}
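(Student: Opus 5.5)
The plan is to bound the height of the B-tree from \Cref{def:indexed} and then count one page read per level plus one final data read. First we fix the structure. The $N$ data pages are the leaves. Every internal node is a single page holding at most $b$ (key, separator) entries, since by the capacity computation after \Cref{def:indexed}, $b = \lfloor C/(\eta+\kappa+\delta)\rfloor$ entries fit in $C$ tokens. The separator keys in a node partition that node's key range among its children, so exactly one child's range contains a given key $k$.

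The core step is a height bound, proved by induction on the depth $h$. A tree of depth $h$ in which every internal node has at most $b$ children has at most $b^h$ leaves. For the base case, $h=0$ gives a single leaf. For the inductive step, the root has at most $b$ subtrees, each of depth at most $h-1$. For the B-tree, the tree is built with full fan-out $b$ at every level except possibly the last, for example by bulk-loading the sorted pages bottom-up in groups of $b$. Then the number of pages at each level up the tree is $\lceil N/b\rceil, \lceil N/b^2\rceil,\dots$, and the first level with a single node is reached at depth $h=\lceil\log_b N\rceil$, the least $h$ with $b^h\ge N$. At this step I must state the convention explicitly: either internal nodes are full, or the tree is the minimal-height tree with fan-out $b$. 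A B-tree with minimum occupancy $\lceil b/2\rceil$ would only give $\lceil\log_{\lceil b/2\rceil}N\rceil$. I will adopt the definition's stated claim that root-to-leaf navigation takes $\lceil\log_b N\rceil$ index reads as the intended structural convention, and justify it by the bulk-load construction.

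Next comes the search procedure and its correctness. Starting at the root, call $\texttt{read\_index}(k)$. At each internal node, compare $k$ against the separators to select the unique child whose range contains $k$. The invariant is that the target's page lies in the subtree of the current node, which holds initially and is preserved because the separators partition the current node's range. Each level costs exactly one page read, and the depth is $h\le\lceil\log_b N\rceil$. On reaching a leaf pointer, one $\texttt{read}(i)$ retrieves the data page containing $k$. This page exists by the invariant whenever $k$ is stored; if $k$ is absent, the same path certifies absence. The total is at most $\lceil\log_b N\rceil+1$ reads.

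The main obstacle is only the height convention discussed above, not the counting. I also need to handle the edge case $N=1$, where there are $0$ index reads and the bound $1$ holds. Finally I will note that $b\ge2$ is needed for $\log_b$ to be meaningful, consistent with the remark after \Cref{def:indexed}.
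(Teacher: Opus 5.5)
Your proposal is correct and follows the paper's own argument: fan-out $b$ gives height $\lceil \log_b N\rceil$, and the count is one index read per level plus one data read. Where the paper's one-line proof (``each internal node partitions the search space by a factor of $b$'') and Definition~\ref{def:indexed} only assume full fan-out, you make that convention explicit. One small point: your induction showing at most $b^h$ leaves only yields the lower bound $h \ge \log_b N$, not the upper bound you need; your bulk-load count $\lceil N/b^j\rceil$ is what actually establishes $h = \lceil \log_b N\rceil$, so the induction can be dropped.
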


\begin{proof}
Each internal node of the B-tree partitions the search space by a factor of~$b$.
Starting from the root, $\lceil \log_b N \rceil$ comparisons identify the correct leaf, which is read in one additional operation.
\end{proof}

\begin{theorem}[Library separation]\label{thm:library}
The separation between sequential and indexed retrieval is exponential:
\[
\frac{R_{\text{seq}}(N)}{R_{\text{idx}}(N)} = \frac{\Omega(N)}{\bigO(\log_b N)} = \Omega\!\left(\frac{N}{\log_b N}\right).
\]
\end{theorem}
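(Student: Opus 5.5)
The plan is to obtain the separation by dividing the lower bound of Theorem~\ref{thm:sequential} by the upper bound of Theorem~\ref{thm:indexed}. The point that needs care is that the displayed ratio of asymptotic classes is only meaningful once we fix \emph{which} quantities are compared.

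\textbf{Fixing the quantities.} I will define $R_{\text{seq}}(N)$ as the worst-case number of page reads of the best sequential-access strategy that locates the target in an $N$-page store. I will define $R_{\text{idx}}(N)$ as the worst-case number of page reads of B-tree navigation with branching factor $b\ge 2$, as in Definition~\ref{def:indexed}. This sets up a lower-bound-over-upper-bound argument, where the order of quantifiers is harmless: the sequential bound holds for every strategy, and the indexed bound is witnessed by one specific strategy.

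\textbf{Bounding the ratio.} Theorem~\ref{thm:sequential} gives $R_{\text{seq}}(N)\ge N$ via the adversary argument, so there is no hidden constant. Theorem~\ref{thm:indexed} gives $R_{\text{idx}}(N)\le \lceil\log_b N\rceil+1$. For $N\ge 2$ this is at most $\log_b N + 2 \le 3\log_b N$ whenever $\log_b N\ge 1$, that is, $N\ge b$. For $2\le N<b$ the denominator is bounded by the constant $2$. Dividing yields
\[
\frac{R_{\text{seq}}(N)}{R_{\text{idx}}(N)} \;\ge\; \frac{N}{3\log_b N}\quad (N\ge b),
\]
which is $\Omega(N/\log_b N)$ with explicit constant $1/3$. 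Because $\log_b N = \ln N/\ln b$, this is also $\Omega(N\ln b/\ln N)$, and the result remains $\Omega(N/\log N)$ uniformly for any fixed $b\ge 2$.

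\textbf{Justifying ``exponential''.} I would then note that the indexed cost $r=\bigO(\log_b N)$ means the sequential cost is $N = b^{\Theta(r)}$. This is exponential in the indexed cost, which is the precise sense of ``exponential separation'' intended. If one instead wants an expected-cost version under uniform placement, the same division applies to $(N+1)/2$ and gives the same order.

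\textbf{Main obstacle.} The main obstacle is the informality of dividing $\Omega(\cdot)$ by $\bigO(\cdot)$. I handle it by replacing both with the explicit bounds above. I also need to treat the small-$N$ regime, where $\log_b N<1$ and the ``$+1$'' dominates, separately so that the constant is valid for all sufficiently large $N$.
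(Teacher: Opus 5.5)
Your proposal is correct and takes the same route as the paper, which gives no separate proof of Theorem~\ref{thm:library} and obtains it directly by dividing the adversarial lower bound of Theorem~\ref{thm:sequential} by the navigation bound $\lceil \log_b N\rceil + 1$ of Theorem~\ref{thm:indexed}. Your explicit constant $1/3$ for $N \ge b$, your separate handling of small $N$, and your reading of ``exponential'' as $N = b^{\Theta(r)}$ make rigorous what the paper writes as the informal quotient $\Omega(N)/\bigO(\log_b N)$.
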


\begin{theorem}[Reasoning cost accumulation]\label{thm:reasoning}
Consider a reasoning process of $T$ steps, where at each step the agent may read from or write to the external store.
After $t$ steps, the store contains $N_0 + t$ pages.
\begin{itemize}[nosep]
  \item Under sequential access, the cumulative retrieval cost is $\sum_{t=1}^{T} \Omega(N_0 + t) = \Omega(T \cdot N_0 + T^2/2)$.
  \item Under indexed access, the cumulative cost is $\sum_{t=1}^{T} \bigO(\log_b(N_0 + t)) = \bigO(T \log_b(N_0 + T))$.
\end{itemize}
The separation ratio grows as $\Omega(T / \log_b T)$ for large~$T$.
\end{theorem}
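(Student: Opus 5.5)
The plan is to apply the per-query bounds of \Cref{thm:sequential} and \Cref{thm:indexed} at each step, with the store size evaluated at that step, and then sum over $t=1,\dots,T$.

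First I will fix the cost model. At step $t$ the store holds $N_t = N_0 + t$ pages. I take the worst case over steps, so every step issues one retrieval query against the current store; the theorem says ``may read,'' and a reading step is the costly case.

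\emph{Sequential case.} At step $t$ the adversary argument of \Cref{thm:sequential} applies verbatim to the current store. The adversary can place the target on any page that is still unread, so the query needs at least $N_t$ reads. Summing gives
\[
\sum_{t=1}^{T} (N_0+t) = T N_0 + \frac{T(T+1)}{2} \;\ge\; T N_0 + \frac{T^2}{2}.
\]
The adversary has to be able to act independently at each step, so I will assume the agent gains no useful information about the location of the new target from earlier reads. Concretely, each step's target is fresh, for example a page that was just written, or the adversary re-chooses placement among the pages not read during that query. I will state this assumption explicitly. Under the uniform-placement variant, the expected cost per step is $(N_t+1)/2$, and the sum is still $\Theta(TN_0 + T^2)$.

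\emph{Indexed case.} By \Cref{thm:indexed}, step $t$ costs at most $\lceil \log_b N_t\rceil + 1 \le \log_b(N_0+T) + 2$, using monotonicity of the logarithm. Summing gives at most $T(\log_b(N_0+T)+2) = \bigO(T\log_b(N_0+T))$, provided $N_0+T \ge b$ so the additive constant is absorbed. I also need to maintain the B-tree under the $T$ writes. Standard B-tree insertion touches $\bigO(\log_b N_t)$ nodes, so even if writes are charged as page operations the total stays within the same bound.

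For the ratio I will take $N_0$ fixed and let $T\to\infty$. The sequential total is at least $T^2/2$. The indexed total is at most $c\,T\log_b(N_0+T) = \bigO(T\log_b T)$, since $\log_b(N_0+T)\le \log_b(2T)$ once $T\ge N_0$. Dividing gives $\Omega(T/\log_b T)$. The ratio of two asymptotic expressions is only meaningful if I compare a concrete lower bound against a concrete upper bound, so I will phrase the conclusion as the explicit inequality $R_{\text{seq}}/R_{\text{idx}} \ge T^2/(2cT\log_b(2T))$.

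The main obstacle is making the per-step adversary legitimate in the sequential case. A persistent agent could otherwise cache what it has learned about page locations across steps, and that would break the $\Omega(N_t)$ bound at later steps. I will handle this by requiring fresh, independent targets per step, and by noting that the $C$-token context bound prevents the agent from carrying a full map of the store beyond $\bigO(C)$ tokens. If that argument proves too delicate, I will fall back to stating the assumption directly as part of the cost model.
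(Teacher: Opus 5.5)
Your proposal is correct and takes the same route as the paper: the paper's justification is just the per-step application of Theorems~\ref{thm:sequential} and~\ref{thm:indexed} to a store of $N_0+t$ pages, summed over $t$, with index maintenance assumed to cost $\bigO(\log_b N)$ per insertion, as you also note. Your extra care is welcome, but the per-step $\Omega(N_t)$ floor is an assumption of the access model rather than something the $C$-token bound delivers, so your fallback of stating the fresh-target assumption outright is the right resolution: under Definition~\ref{def:sequential} an agent could write its own directory pages and read them by position, and ``a page that was just written'' sits at a known position, making it the easy target rather than the hard one.
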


\begin{corollary}[Token cost separation]\label{cor:tokens}
In an agentic loop where each API call receives the full message history, the FLAT agent's $k$-th call processes $\bigO(k)$ prior pages.
Total token cost is $\bigO(N^2)$ for a store of $N$ pages.
The INDEXED agent reads a fixed-size index of $\bigO(N)$ tokens plus one page; total cost is $\bigO(N)$.
The token cost ratio is $\Theta(N)$.
\end{corollary}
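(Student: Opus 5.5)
The plan is to count tokens directly, summing per-call input sizes for each agent under the cost model of the statement. Each page has at most $C$ tokens, and $C$ is treated as a constant. I will assume the natural workload: the agent answers one lookup per page, so there are $N$ calls (or $N$ queries over a store of $N$ pages). For the FLAT agent, the conversation history is the sequential tape of Definition~\ref{def:sequential}. Each call appends at least one page's worth of content (the page just read, or the output just written). The $k$-th call therefore carries $k-1$ prior pages, and its input has $\Theta(kC)$ tokens, since the model must re-read everything in the history.

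The first step is the FLAT sum: $\sum_{k=1}^{N} \Theta(kC) = \Theta(C N^2/2) = \Theta(N^2)$. This gives the $\bigO(N^2)$ upper bound and, under the assumption that every call appends a page, also a matching lower bound. The matching bound is what the $\Theta(N)$ ratio needs. The structure mirrors Theorem~\ref{thm:reasoning}: there the sequential sum $\sum_t \Omega(N_0+t)$ counted page reads, and here it counts tokens resent per call.

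The second step is the INDEXED agent. I will read the statement as saying that the total cost for one lookup is $\bigO(N)$: one read of a flat index of $N$ entries, each costing $\eta+\kappa+\delta$ tokens, plus one data page of $\le C$ tokens. This is the single-level case of Definition~\ref{def:indexed}, and the history does not grow because each lookup starts fresh. Dividing then gives a ratio of $\Theta(N^2)/\Theta(N) = \Theta(N)$.

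The main obstacle is making the two totals comparable. If the INDEXED agent also performed $N$ lookups, each reading an $\bigO(N)$-token index, its total would be $\Theta(N^2)$ as well, and the ratio would collapse. I would first try to fix the comparison as ``total tokens to reach the target, including the history that must be traversed.'' Under that accounting, FLAT pays $\Theta(N^2)$ because it builds history through $N$ scanning calls (Theorem~\ref{thm:sequential} forces $\Omega(N)$ reads in the worst case). INDEXED reaches the target in $\bigO(1)$ calls of $\bigO(N)$ tokens. If that framing proves unsatisfactory, I would fall back to a hierarchical index using Theorem~\ref{thm:indexed}. Reading $\lceil\log_b N\rceil$ nodes of $\le C$ tokens each costs $\bigO(\log_b N)$ tokens per lookup, and the stated $\bigO(N)$ bound then holds as a weak upper bound. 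The ratio in that case becomes $\Omega(N/\log_b N)$, as in Theorem~\ref{thm:library}, and I would record that the sharper $\Theta(N)$ needs the single-lookup accounting.
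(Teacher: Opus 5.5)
Your final accounting is the one the corollary encodes. The paper gives no separate proof, and \S\ref{sec:tokens} measures tokens per query. For a single lookup, the FLAT agent's calls are the $\Omega(N)$-read scan of Theorem~\ref{thm:sequential}, with the $k$-th call resending $k-1$ pages, so the total is $\sum_{k\le N}\Theta(kC)=\Theta(N^2)$. The INDEXED agent makes $\bigO(1)$ calls whose inputs are dominated by a $\Theta(N)$-token TOC. Your insistence on matching lower bounds (pages carrying $\Theta(C)$ tokens, index entries of at least one token) is what actually licenses the $\Theta(N)$ ratio; the purely $\bigO$ statement does not. Two small repairs are needed. First, discard the opening ``one lookup per page'' workload outright, since it is incompatible with the INDEXED count, as you yourself notice. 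Second, the INDEXED history does grow within a lookup: the TOC is resent with the \texttt{read\_page} and \texttt{submit\_answer} calls. This changes only the constant.

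The fallback, however, is wrong. With a genuine B-tree as in Definition~\ref{def:indexed}, every node has at most $C$ tokens. Under full-history resending, the $\lceil\log_b N\rceil+1$ calls then cost $\sum_j \bigO(jC)=\bigO(C(\log_b N)^2)$ tokens. Against FLAT's $\Theta(N^2)$, the token ratio is $\Omega(N^2/(\log_b N)^2)$, not $\Omega(N/\log_b N)$; you have substituted the page-read ratio of Theorem~\ref{thm:library} for a token ratio. The lesson you record is also misplaced. The $\Theta(N)$ ratio does not hinge on single-lookup accounting. It hinges on INDEXED's cost being dominated by a single-level TOC of $\Theta(N)$ tokens, i.e.\ the experimental INDEXED condition. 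That TOC cannot even be a node of Definition~\ref{def:indexed}'s tree once $N>b$, since a node holds at most $b=\lfloor C/(\eta+\kappa+\delta)\rfloor$ entries. Under a real B-tree the stated $\bigO(N)$ bound for INDEXED still holds trivially, but the ratio is $\omega(N)$. The corollary's $\Theta(N)$ should therefore be understood as specific to the flat TOC.
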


\subsection{Interpretation}

The theorems say something precise about the relationship between a bounded processor and an organized versus unorganized store.
The model has no persistent state across steps beyond what is carried in the next input.
Finding the correct answer is therefore equivalent to finding the correct input tokens: the page whose contents, when loaded into the window, enable the desired computation.

This equivalence gives the separation its specific character.
Without a file system, the transformer can only reach adjacent pages, so its reachability grows linearly: $R(k) = k$.
With a B-tree, each page contains $b$ pointers to children, so reachability grows exponentially: $R(k) = b^k$.
The separation is not an artifact of the model; it is a direct consequence of branching.
Linear reachability yields linear search cost; exponential reachability yields logarithmic search cost.

In the dynamic case (Part~II), the store grows by one page per step.
Each new read must search a larger store, so the linear cost accumulates quadratically while the logarithmic cost accumulates log-linearly.
The compounding is not a separate phenomenon but the static separation applied to an expanding domain.
For tasks involving large accumulated state (debugging, research, long-horizon planning), the retrieval component is typically the dominant cost, and the separation applies with full force.

\section{Predictions and Evidence}\label{sec:predictions}

The Library Theorem is a worst-case result about idealized models.
Its value depends on whether the formal separation connects to observable phenomena.
We identify three predictions, each with a specific empirical test and falsification condition.

\subsection{The in-context ceiling (P1)}

\paragraph{Prediction.}
On retrieval-heavy tasks where accumulated state exceeds the context window, no in-context reasoning strategy will systematically outperform CoT self-consistency at matched token budgets.

\paragraph{Formal basis.}
Theorem~\ref{thm:sequential}: all strategies confined to the context window share the sequential reachability bound $R(k) \leq k$.
Strategies may differ in how they allocate tokens within the window (debate, reflection, tree search), but none escapes the linear retrieval floor.

\paragraph{Existing evidence.}
\citet{wang2023selfconsistency} find that self-consistency gains from multiple CoT samples plateau beyond 5--10 samples on arithmetic and commonsense reasoning tasks, consistent with a linear retrieval ceiling.
\citet{snell2024scaling} show that scaling test-time compute via best-of-$n$ sampling yields diminishing returns compared to scaling model parameters, measuring this on GSM8K and MATH.
Neither study was designed to test P1 directly; both measure total task performance rather than isolating retrieval cost.

\paragraph{Falsification.}
P1 would be falsified by an in-context strategy that achieves super-linear reachability, accessing information from more than $k$ pages in $k$ steps without external memory.
A concrete test: on a synthetic retrieval task with $N$ facts, measure accuracy as a function of~$N$ for the best in-context strategy; P1 predicts accuracy degrades as $\Omega(1/N)$.

\paragraph{Boundary.}
On computation-bound tasks where the difficulty is in per-step computation rather than retrieval, all strategies sharing the same computational class will perform comparably regardless of memory organization.
P1 applies specifically where $N \gg C$.

\subsection{The indexed advantage (P2)}

\paragraph{Prediction.}
Among external-memory strategies, structured scratchpads (with headers, named sections, cross-references) will outperform unstructured scratchpads (flat text append) at matched total token budgets.
The advantage should scale as $\Omega(N / \log_b N)$.

\paragraph{Formal basis.}
Theorem~\ref{thm:library}: indexed retrieval costs $\bigO(\log_b N)$ versus $\bigO(N)$ for sequential scan.

\paragraph{Falsification.}
P2 would be falsified if structured scratchpads showed no advantage over flat scratchpads at large~$N$, or if the advantage were constant rather than growing.
The critical test is not whether structured helps at one scale, but whether the advantage \emph{increases} with~$N$.
A flat delta (constant improvement regardless of~$N$) would suggest the gain comes from prompt formatting rather than retrieval efficiency.

\paragraph{Relation to RAG.}
Retrieval-augmented generation \citep{lewis2020rag} implements the static case of Theorem~\ref{thm:library}: pre-existing documents, pre-built index, read-only access.
Its success across knowledge-intensive tasks constitutes indirect evidence for the formal separation.
Theorem~\ref{thm:reasoning} extends the prediction to a setting RAG does not address: indexing self-generated reasoning state.
Current agents index external knowledge via RAG but store their own intermediate results as flat conversation history.

\subsection{Compounding with depth (P3)}

\paragraph{Prediction.}
At greater reasoning depths~$T$, the indexed advantage should compound per Theorem~\ref{thm:reasoning}, with the separation ratio growing as $(N_0 + T) / \log_b(N_0 + T)$.

\paragraph{Falsification.}
P3 would be falsified if the indexed advantage saturated or reversed at greater depths.
Saturation could occur if index maintenance errors accumulated and degraded retrieval accuracy faster than the store grew.
Reversal could occur if the overhead of maintaining the index consumed enough inference steps to offset the retrieval savings.

\paragraph{Empirical test.}
Fix $N_0$ and vary $T$ across 5--7 levels.
Measure total inference cost (or accuracy at fixed cost) with and without indexing.
Plot the separation ratio against~$T$ and fit a functional form.
P3 predicts a monotonically increasing ratio with slope consistent with $(N_0 + T) / \log_b(N_0 + T)$.

\section{Experiments}\label{sec:experiments}

We report controlled experiments testing the predictions of Theorems~\ref{thm:sequential}--\ref{thm:reasoning} against a language model agent.
A single benchmark design is instantiated across three content types to isolate the effect of content familiarity on retrieval behavior.

\paragraph{Notation.}
The theory counts pages ($N$).
The experiments generate $M$ items packed $P$ items per page, yielding $N = \lceil M/P \rceil$ pages.
All experimental parameters are reported in terms of~$M$ (item count); theoretical predictions are translated via $N = M/P$.
The predicted FLAT page-read cost from Theorem~\ref{thm:sequential} is $(N+1)/2 \approx M/(2P)$.
All summary statistics report medians with interquartile ranges (IQR) unless noted otherwise.

\subsection{Benchmark design}\label{sec:design}

A GPT-4o-mini agent must find the value associated with a target key in a paginated store.
The agent issues tool calls---\texttt{read\_page}, \texttt{get\_index}, \texttt{get\_section\_index}, \texttt{submit\_answer}---and cannot inspect any page without explicitly calling the tool.
A token budget of 100,000 per trial aborts runaway hallucination loops.
Trials are seeded by trial index for reproducibility.

Three conditions instantiate the access models of \S\ref{sec:prelim}:

\paragraph{FLAT.}
Pages are ordered randomly with respect to keys.
The agent has only \texttt{read\_page(n)} and \texttt{submit\_answer(v)}.
No structural information is available.
Theorem~\ref{thm:sequential} predicts $\Omega(M/P)$ page reads.

\paragraph{INDEXED.}
Pages are sorted by key.
The agent additionally has \texttt{get\_index()}, which returns a table of contents mapping each page number to its key range.
Theorem~\ref{thm:indexed} predicts $\bigO(1)$ page reads.

\paragraph{FLAT-SORTED.}
Pages are sorted by key in ascending order, and the system prompt tells the agent that pages are sorted.
No index or TOC is provided.
The agent has only \texttt{read\_page(n)} and \texttt{submit\_answer(v)}, but knows that page~1 has the smallest keys and page~$N$ the largest.
This condition tests whether sort order alone---which in principle enables binary search---substitutes for an explicit index.

\paragraph{INDEXED-CORRUPTED.}
Identical to INDEXED but the key-range assignments in the TOC are shuffled: each page number is assigned the key range of a different page.
Following the index leads to the wrong page.
This is a causal control: if performance collapses when the index is corrupted, the INDEXED advantage is causally attributable to the index structure.

\paragraph{DEEP-INDEXED.}
Pages are organized into sections of $S = 10$ pages.
A master TOC lists sections with key ranges; \texttt{get\_section\_index(s)} returns the page-level TOC for section~$s$.
The agent navigates: master TOC $\to$ section TOC $\to$ target page.
Theory predicts $\bigO(1)$ page reads at all~$M$, with three tool calls replacing the single flat TOC.

\subsection{Three content types}\label{sec:content}

The same benchmark architecture is tested with three content types, creating a controlled gradient of content familiarity:

\paragraph{Hash (Experiment~1).}
Keys are random 4-digit integers; values are random 4-letter uppercase strings.
Generated fresh per trial, ruling out memorization.
The model has zero prior knowledge of key-value associations.
This is the purest test of the navigational mechanism.

\paragraph{Numeric (Experiment~2).}
Items are integers $1, \ldots, M$ in sorted order.
Each page shows entries like ``Item 247: 247.''
The content is trivially predictable---Item $k$ has value $k$---but the task still requires the agent to \emph{read} the value from the page and submit it via tool call.
This tests whether the model can follow the retrieval protocol even when it ``knows'' the answer.

\paragraph{Encyclopedia (Experiment~3).}
Keys are English words (alphabetically sorted); values are one-sentence encyclopedia facts.
The corpus is drawn from topics the model has certainly seen during training (``aalii,'' ``abattoir,'' ``acetylene,'' \ldots).
The model has strong parametric knowledge of these topics.
This tests whether content familiarity disrupts the retrieval protocol.

\medskip\noindent
All three experiments use $P = 10$ items per page, $S = 10$ pages per section, and identical tool schemas.
The primary model is GPT-4o-mini; we replicate key conditions on GPT-5.4 in \S\ref{sec:multimodel}.
The only variable within each experiment is the content on the pages.

\subsection{Search property (Theorems~\ref{thm:sequential}--\ref{thm:library})}\label{sec:search}

\begin{table}[t]
\centering
\small
\begin{tabular}{@{}r rrrr rrrr@{}}
\toprule
$M$ & FLAT $\tilde{R}$ & SORT $\tilde{R}$ & IDX $\tilde{R}$ & CORR $\tilde{R}$ & FLAT acc & SORT acc & IDX acc & CORR acc \\
\midrule
50  & 3.0  & 4.0  & 1.0 & 3.0  & 100\% & 100\% & 100\% & 94\% \\
100 & 6.0  & 5.0  & 1.0 & 6.0  & 100\% & 100\% & 100\% & 88\% \\
200 & 9.0  & 7.0  & 1.0 & 11.0 & 98\%  & 100\% & 94\%  & 74\% \\
500 & 21.0 & 21.0 & 1.0 & 15.5 & 94\%  & 97\%  & 94\%  & 58\% \\
\bottomrule
\end{tabular}
\caption{Median page reads ($\tilde{R}$) and accuracy by condition and item count $M$ (hash content, GPT-4o-mini, $P = 10$). FLAT and INDEXED: 50 trials/cell; FLAT-SORTED and CORRUPTED: 30 and 50 trials/cell respectively. Predicted FLAT cost: $M/20 \in \{2.5, 5.0, 10.0, 25.0\}$. SORT = FLAT-SORTED, IDX = INDEXED, CORR = INDEXED-CORRUPTED.}
\label{tab:search}
\end{table}

Table~\ref{tab:search} reports the core results on hash content (Experiment~1).
FLAT tracks the theoretical prediction: the observed medians---3.0, 6.0, 9.0, 21.0---match $M/20$ closely across the full range.
INDEXED holds constant at 1.0 median page reads for all $M \in \{50, 100, 200, 500\}$, confirming the $\bigO(1)$ prediction.

FLAT-SORTED tests whether sort order alone can substitute for an explicit index.
At small $M$ the model achieves a modest advantage (median 4--7 reads versus 3--9 for FLAT), but at $M = 500$ the advantage vanishes: median 21 reads, identical to FLAT.
The model attempts binary search at small scales but cannot sustain it: it loses track of its search bounds, revisits pages, and reverts to near-linear scanning.
Sort order provides the \emph{information} needed for logarithmic retrieval but not the \emph{mechanism}---the model must compute the search strategy on every step, and this computation degrades with scale.

\begin{figure}[t]
\centering
\includegraphics[width=0.95\textwidth]{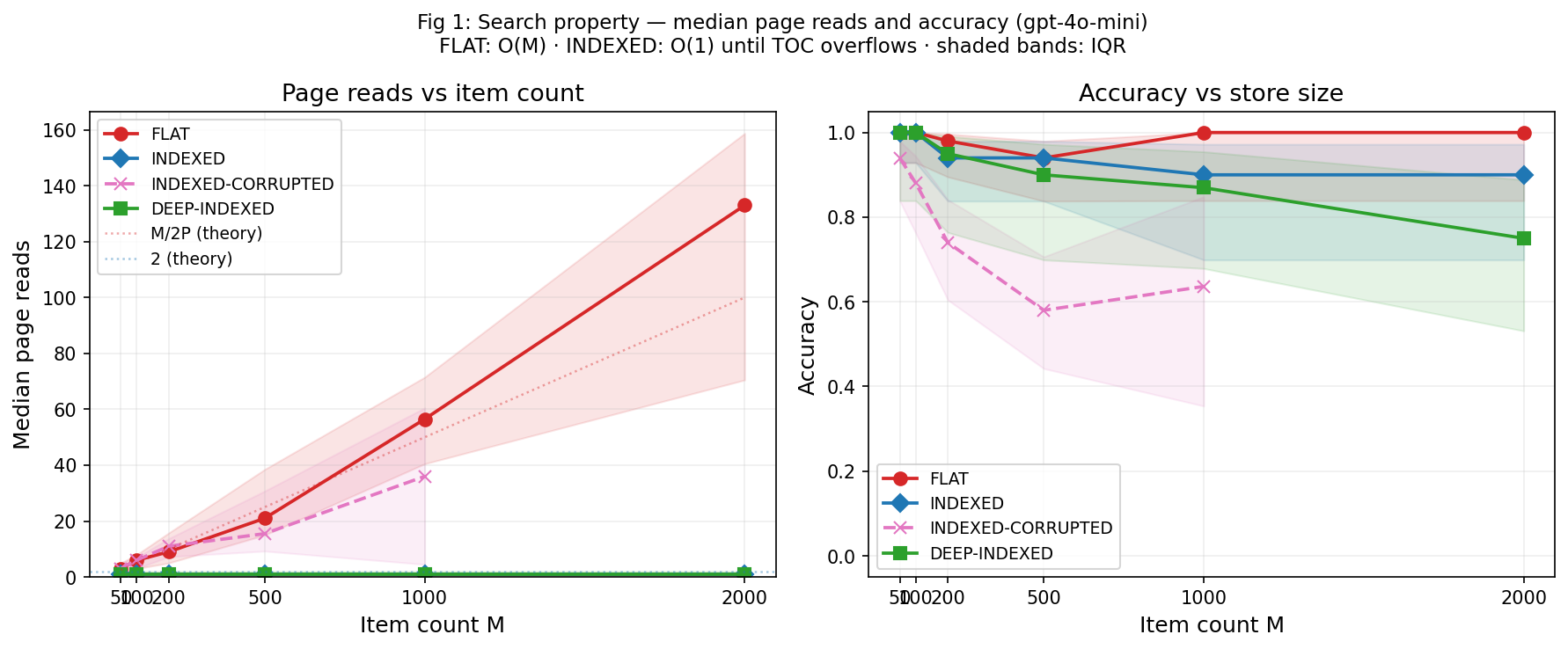}
\caption{Page reads ($R$) and accuracy versus item count~$M$ for the three conditions (hash content).
FLAT (red) grows linearly, tracking the $M/(2P)$ prediction.
INDEXED (blue) remains constant at ${\approx}1$.
INDEXED-CORRUPTED (pink) exceeds FLAT at large~$M$, confirming causal index use.
Shaded bands show interquartile range.}
\label{fig:search}
\end{figure}

The separation ratio (Figure~\ref{fig:separation}, left) grows at $3\times$, $6\times$, $9\times$, $21\times$ for $M \in \{50, 100, 200, 500\}$, tracking the theoretical $M/20$ curve with no free parameters.
Figure~\ref{fig:separation} (right) shows per-trial distributions: FLAT broadens rightward with~$M$ while INDEXED collapses to a point mass at~1.

\begin{figure}[t]
\centering
\includegraphics[width=0.95\textwidth]{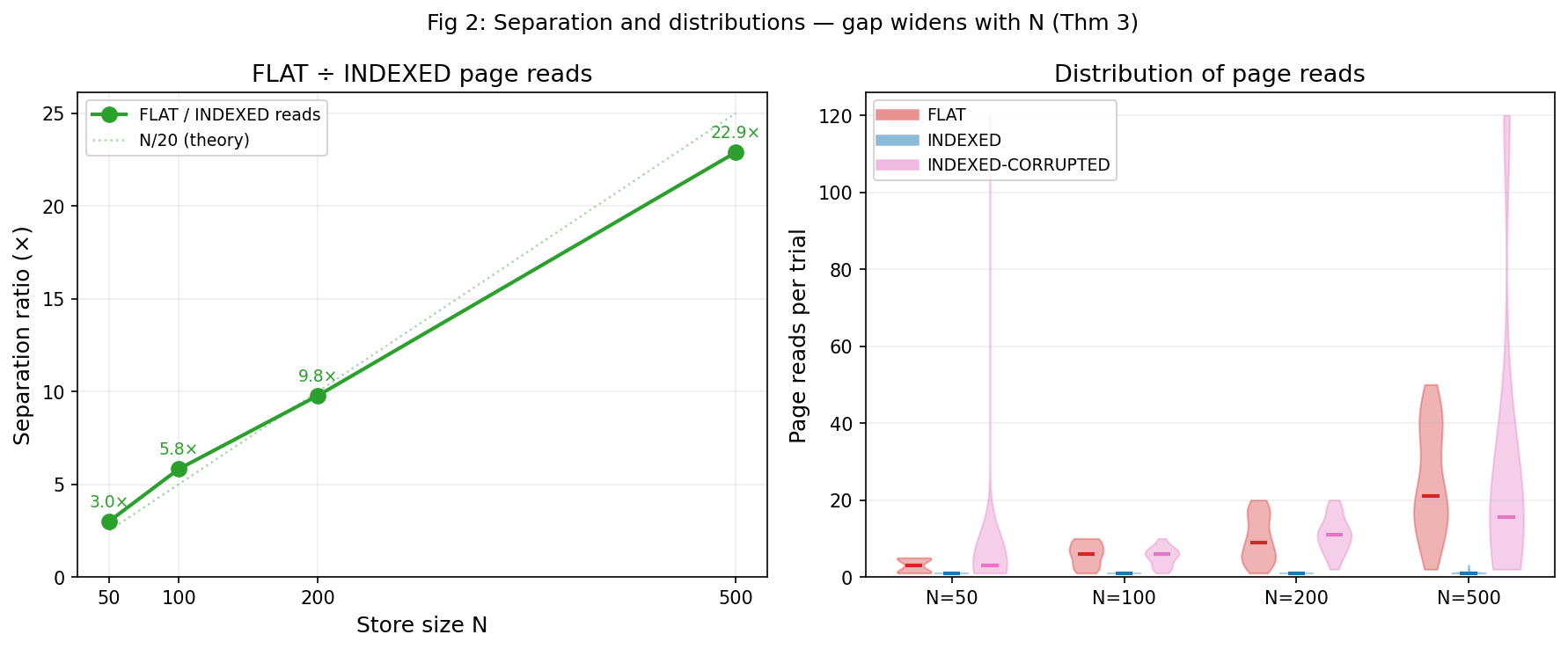}
\caption{Left: separation ratio (FLAT/INDEXED page reads) versus~$M$, with theoretical prediction $M/20$ (dashed).
Right: per-trial read distributions by condition at each~$M$.
FLAT broadens rightward; INDEXED collapses to~1.}
\label{fig:separation}
\end{figure}

The CORRUPTED condition establishes causal attribution.
At $M = 200$, CORRUPTED median reads (11.0) exceed FLAT (9.0); at $M = 500$ the CORRUPTED median is 15.5 versus 21.0 for FLAT, but accuracy collapses to 58\% (versus 94\% for FLAT).
The model follows the corrupted index to the wrong page, fails to find the target, and then falls back to scanning---paying one wasted read on top of the expected FLAT cost.
The model is actively using the index.
When the index is wrong, accuracy degrades below the unindexed baseline.

\subsection{Token cost (Corollary~\ref{cor:tokens})}\label{sec:tokens}

\begin{table}[t]
\centering
\begin{tabular}{@{}r r r r@{}}
\toprule
$M$ & FLAT $\widetilde{\mathrm{Tok}}$ & IDX $\widetilde{\mathrm{Tok}}$ & Ratio \\
\midrule
50    & 1,284    & 842     & 1.5$\times$ \\
100   & 3,191    & 952     & 3.4$\times$ \\
200   & 5,904    & 1,172   & 5.0$\times$ \\
500   & 25,005   & 1,832   & 13.7$\times$ \\
1,000 & 174,264  & 3,348   & 52.0$\times$ \\
2,000 & 913,983  & 5,950   & 153.6$\times$ \\
\bottomrule
\end{tabular}
\caption{Median total tokens ($\widetilde{\mathrm{Tok}}$) per trial (hash content). Large-$M$ cells (1,000 and 2,000) use 20 trials; all others use 50. INDEXED token cost grows linearly (proportional to TOC size); FLAT cost grows quadratically.}
\label{tab:tokens}
\end{table}

\begin{figure}[t]
\centering
\includegraphics[width=0.95\textwidth]{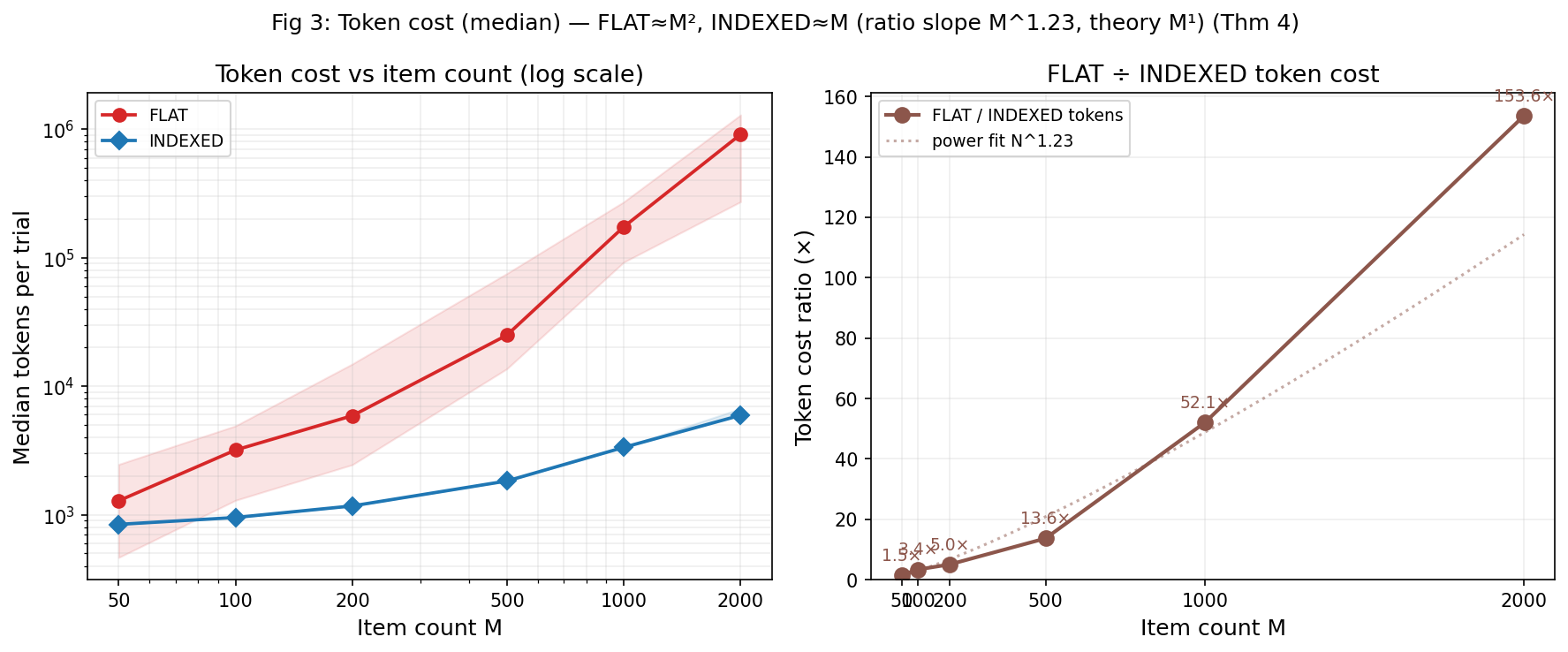}
\caption{Token cost ($\mathrm{Tok}$) versus item count~$M$ (hash content).
Left: median total tokens per trial by condition (shaded bands show interquartile range).
FLAT scales as ${\approx}M^{2}$; INDEXED is orders of magnitude cheaper.
Right: token cost ratio FLAT/INDEXED (median).}
\label{fig:cost}
\end{figure}

FLAT token cost scales quadratically (Table~\ref{tab:tokens}): doubling~$M$ from 1,000 to 2,000 increases the median from 174K to 914K tokens ($5.3\times$, consistent with $M^2$ growth).
The INDEXED cost grows linearly, driven by the growing TOC size.
At $M = 2{,}000$, the FLAT agent consumes 914K tokens per query while INDEXED requires 6K---a $154\times$ separation.

\subsection{Multi-model replication}\label{sec:multimodel}

We replicate the FLAT-SORTED, INDEXED, and DEEP-INDEXED conditions on GPT-5.4 to test whether a substantially more capable model closes the separation gap (Table~\ref{tab:multimodel}, Figure~\ref{fig:multimodel}).

\begin{table}[t]
\centering
\begin{tabular}{@{}l r r r r r r@{}}
\toprule
 & \multicolumn{3}{c}{Median reads $\tilde{R}$} & \multicolumn{3}{c}{Accuracy} \\
\cmidrule(lr){2-4} \cmidrule(lr){5-7}
$M$ & SORT & IDX & DEEP & SORT & IDX & DEEP \\
\midrule
\multicolumn{7}{@{}l}{\emph{GPT-4o-mini}} \\
50  & 4.0  & 1.0 & 1.0 & 100\% & 100\% & 100\% \\
100 & 5.0  & 1.0 & 1.0 & 100\% & 100\% & 100\% \\
200 & 7.0  & 1.0 & 1.0 & 100\% & 94\%  & 95\%  \\
500 & 21.0 & 1.0 & 1.0 & 97\%  & 94\%  & 90\%  \\
\midrule
\multicolumn{7}{@{}l}{\emph{GPT-5.4}} \\
50  & 3.0  & 1.0 & 1.0 & 100\% & 100\% & 100\% \\
100 & 2.0  & 1.0 & 1.0 & 100\% & 100\% & 100\% \\
200 & 3.0  & 1.0 & 1.0 & 100\% & 100\% & 100\% \\
500 & 5.0  & 1.0 & 1.0 & 100\% & 100\% & 100\% \\
\bottomrule
\end{tabular}
\caption{Multi-model comparison on hash content (30 trials/cell). SORT = FLAT-SORTED, IDX = INDEXED, DEEP = DEEP-INDEXED. GPT-5.4 achieves near-optimal binary search on sorted pages (5 reads at $M = 500$ vs.\ theoretical $\log_2 50 \approx 5.6$) but INDEXED still dominates at 1 read.}
\label{tab:multimodel}
\end{table}

\begin{figure}[t]
\centering
\includegraphics[width=0.95\textwidth]{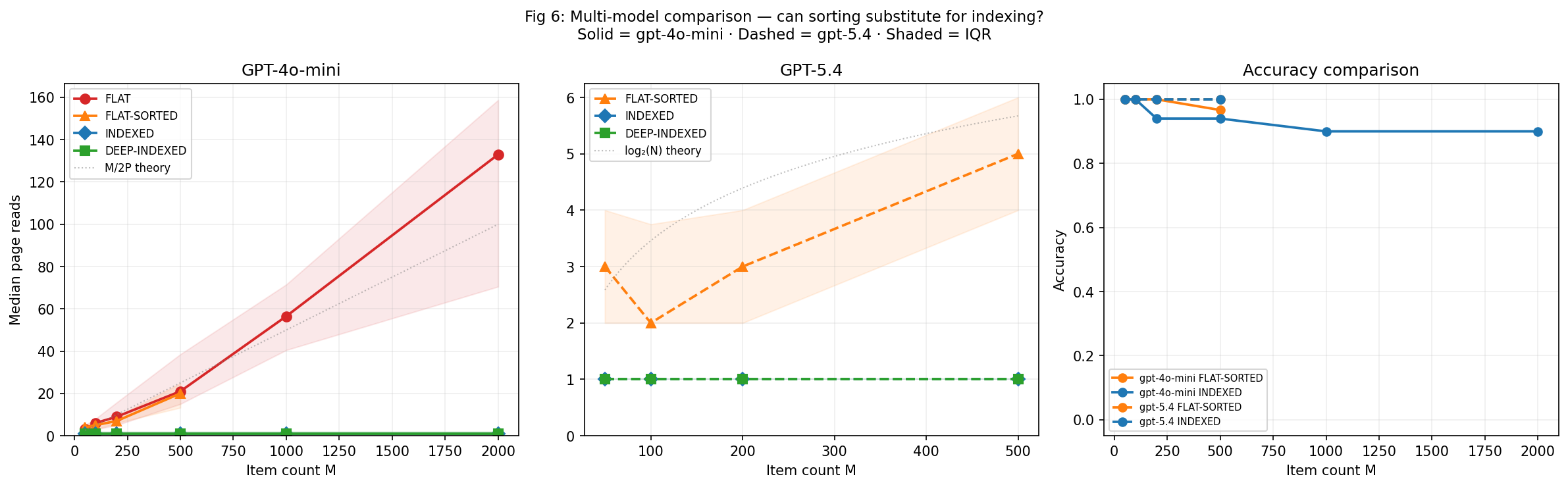}
\caption{Multi-model comparison. Left: GPT-4o-mini cannot sustain binary search on sorted pages; FLAT-SORTED converges to FLAT at $M = 500$.
Center: GPT-5.4 binary-searches effectively ($\tilde{R} = 5$ at $M = 500$), but INDEXED and DEEP-INDEXED hold at~1.
Right: accuracy by condition and model.
Solid lines: GPT-4o-mini. Dashed: GPT-5.4.}
\label{fig:multimodel}
\end{figure}

GPT-5.4 achieves near-optimal binary search on sorted pages: median 2--5 reads across $M \in \{50, \ldots, 500\}$, tracking the $\log_2(M/P)$ curve closely.
At $M = 500$, its 5 reads versus GPT-4o-mini's 21 is a $4.2\times$ improvement---direct evidence that model capability affects the exploitation of structural regularities.
The stronger model is performing \emph{in-context self-indexing}: maintaining search bounds, bisecting, and narrowing, without an external index to guide it.

Yet INDEXED retrieval still dominates: 1 read at all $M$, for both models.
The separation between FLAT-SORTED and INDEXED is $5\times$ even for the stronger model at $M = 500$, and the gap is exponential---at $M = 10{,}000$, binary search would require ${\approx}10$ reads while the index still requires~1.
The index provides a qualitatively different access pattern: the model follows a pointer rather than computing a search strategy.

GPT-5.4 also achieves 100\% accuracy across all conditions and scales tested, compared to GPT-4o-mini's degradation at large $M$.
This confirms that the accuracy shortfalls observed with the weaker model are navigational failures (reaching the wrong page), not limitations of the benchmark design.

\subsection{Two-level indexing (DEEP-INDEXED)}\label{sec:deep}

At $M = 1{,}000$, the flat TOC returned by \texttt{get\_index()} contains $N = 100$ page entries.
Rather than identifying the target range in a single pass, the model searches within the index itself---scanning or binary-searching the 100-entry list.
This is the exact failure mode Theorem~\ref{thm:sequential} predicts when applied to the index as the object of search.
The Library Theorem is recursive: it applies to any sequential store, including the index over a data store.

DEEP-INDEXED restores $\bigO(1)$ retrieval by adding a second level.
Table~\ref{tab:deep} and Figure~\ref{fig:deep} report the results across all three content types.

\begin{table}[t]
\centering
\begin{tabular}{@{}l r r r r r@{}}
\toprule
Content & $M$ & IDX $\tilde{R}$ & DEEP $\tilde{R}$ & IDX acc & DEEP acc \\
\midrule
Hash    & 50    & 1.0 & 1.0 & 100\% & 100\% \\
        & 100   & 1.0 & 1.0 & 100\% & 100\% \\
        & 500   & 1.0 & 1.0 & 94\%  & 90\%  \\
        & 1,000 & 1.0 & 1.0 & 90\%  & 87\%  \\
        & 2,000 & 1.0 & 1.5 & 90\%  & 76\%  \\
        & 5,000 & --- & 2.0 & ---   & 85\%  \\
\midrule
Numeric & 50    & 1.0 & 1.0 & 100\% & 100\% \\
        & 100   & 1.0 & 1.0 & 100\% & 100\% \\
        & 200   & 1.0 & 1.0 & 100\% & 100\% \\
        & 500   & 1.0 & 1.0 & 100\% & 100\% \\
\midrule
Encycl. & 50    & 1.0 & 1.0 & 100\% & 90\%  \\
        & 100   & 1.0 & 1.0 & 100\% & 90\%  \\
        & 200   & 1.0 & 0.0 & 100\% & 27\%  \\
        & 500   & 3.5 & 1.0 & 90\%  & 43\%  \\
\bottomrule
\end{tabular}
\caption{INDEXED versus DEEP-INDEXED across three content types. Median page reads ($\tilde{R}$) and accuracy. Hash and numeric content: retrieval works as predicted. Encyclopedia content: DEEP-INDEXED accuracy collapses at $M \geq 200$ despite median reads near 1, indicating the model generates answers from parametric memory rather than reading them from pages. The median reads of 0.0 at $M = 200$ means the majority of trials hit the token budget before reading \emph{any} page. 30 trials per cell except hash $M \geq 1{,}000$ (50 trials) and hash $M \geq 3{,}000$ (20 trials).}
\label{tab:deep}
\end{table}

\begin{figure}[t]
\centering
\includegraphics[width=0.95\textwidth]{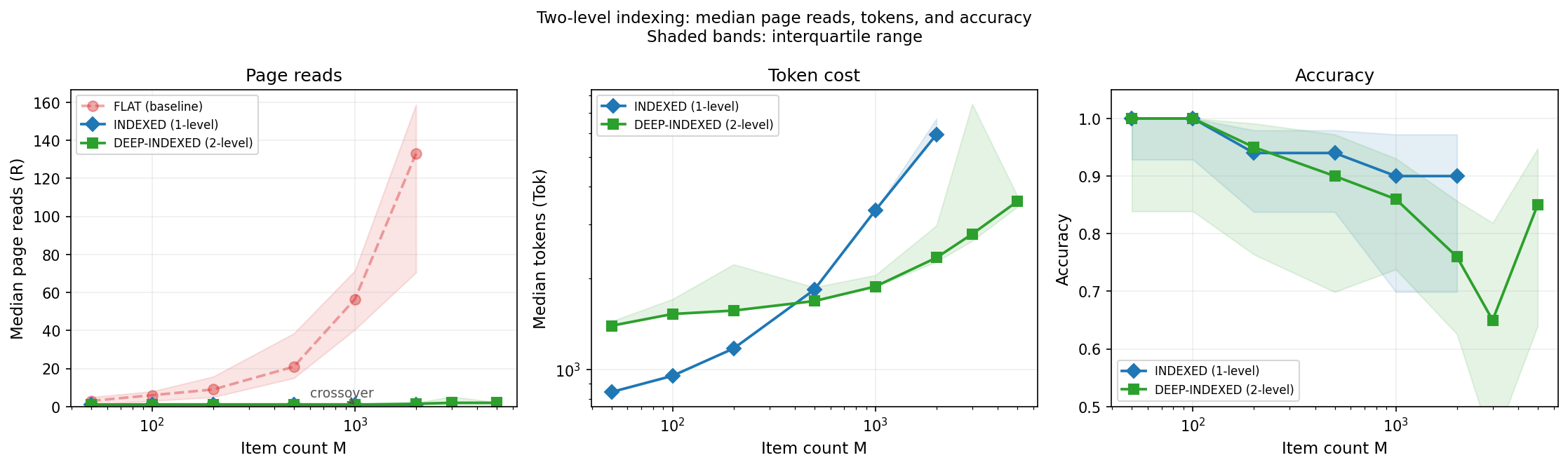}
\caption{Two-level indexing: median page reads, tokens, and accuracy across store sizes.
INDEXED (blue) degrades at $M \geq 1{,}000$; DEEP-INDEXED (green) holds near 1--2.
FLAT (red, dashed) shown for reference.
Shaded bands show interquartile range.}
\label{fig:deep}
\end{figure}

On hash content, DEEP-INDEXED achieves median 1--2 page reads at all scales tested, including $M = 5{,}000$ (500 pages, 50 sections).
At $M = 1{,}000$, where INDEXED degrades, DEEP-INDEXED holds steady, directly confirming the recursive prediction.
The rare accuracy shortfalls at large $M$ (76\% at $M = 2{,}000$) reflect extraction errors---the model reaches the correct page but copies the wrong value---not navigational failures.

On numeric content, the results are even cleaner: 100\% accuracy, exactly 1 page read, zero variance in token counts across all 240 trials.
The model follows the protocol perfectly when the content is abstract and orderly.

\subsection{Parametric memory competition}\label{sec:parametric}

\begin{figure}[t]
\centering
\includegraphics[width=0.95\textwidth]{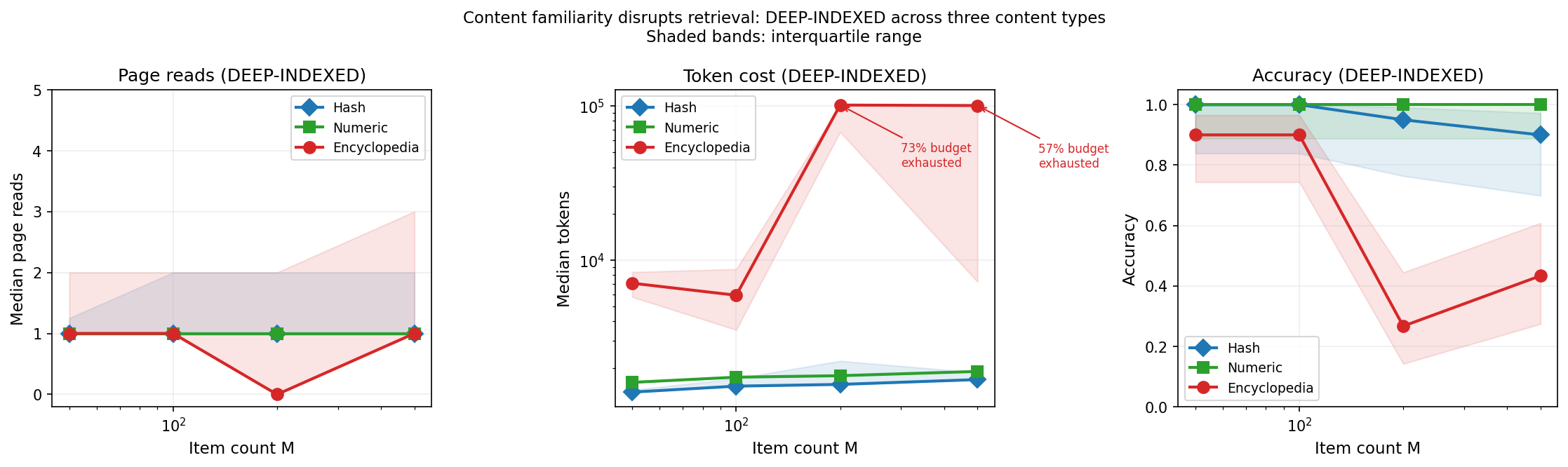}
\caption{Content familiarity disrupts retrieval.
DEEP-INDEXED performance across three content types at matched store sizes.
Left: page reads---hash and numeric hold at~1; encyclopedia collapses.
Center: token cost---encyclopedia inflates by $50\times$ as trials exhaust the 100K budget.
Right: accuracy---encyclopedia drops to 27\% at $M = 200$.
Shaded bands show interquartile range.}
\label{fig:content}
\end{figure}

The encyclopedia results reveal a qualitatively different failure mode.
DEEP-INDEXED accuracy collapses from 90\% at $M \leq 100$ to 27\% at $M = 200$ and 43\% at $M = 500$.
At $M = 200$, 22 of 30 trials (73\%) exhaust the 100K token budget; at $M = 500$, 17 of 30 (57\%) do so.
The median page reads of 0.0 at $M = 200$ means the majority of trials never read \emph{any} data page---the model spends its entire token budget in hallucination-driven tool-call loops, generating plausible answers from parametric memory rather than following the retrieval protocol.

This failure is not structural.
The same DEEP-INDEXED architecture achieves 100\% accuracy on numeric content and 87--100\% on hash content at the same scales.
The index is correctly constructed.
The navigation tools are correctly implemented.
The failure is content-driven: the model recognizes the encyclopedia domain and activates a competing pathway---generating the answer from training rather than reading it from the page.

The three content types form a controlled gradient:

\begin{enumerate}[nosep]
  \item \textbf{Hash} (zero familiarity): random strings the model has never seen.
    The model \emph{must} read the page; there is no parametric alternative.
    Result: protocol followed, retrieval works.
  \item \textbf{Numeric} (predictable but abstract): Item $k$ has value $k$.
    The model could in principle generate the answer, but the content is sufficiently abstract that it follows the tool-call protocol.
    Result: protocol followed perfectly, zero variance.
  \item \textbf{Encyclopedia} (high familiarity): real-world facts about English words.
    The model has strong parametric knowledge and activates it.
    Result: protocol bypassed, catastrophic failure.
\end{enumerate}

The gradient isolates content familiarity as the causal variable.
The index structure is held constant; the tools are held constant; the model is held constant.
Only the content changes, and with it the model's tendency to shortcut the retrieval protocol.

\paragraph{Mechanism.}
The failure pattern is consistent with a \emph{competition} between two inference pathways:
(1)~the \textbf{retrieval pathway}, which follows the tool-call protocol (read index $\to$ read section $\to$ read page $\to$ submit value), and
(2)~the \textbf{parametric pathway}, which generates the answer directly from training data.
When the content is unfamiliar, only the retrieval pathway produces a plausible answer, and the model follows it.
When the content is familiar, the parametric pathway produces a plausible answer \emph{before} the retrieval pathway completes, and the model acts on it---submitting a generated answer, or continuing to generate text instead of issuing tool calls, burning through the token budget.

This is not a failure of the index.
It is a failure of the model to \emph{defer to} the index when it believes it already knows the answer.
The distinction matters for system design: the solution is not a better index but a separation of the two operations.

\paragraph{Token accounting.}
The cost asymmetry is stark.
On numeric content at $M = 500$: DEEP-INDEXED uses median 1,897 tokens, zero variance, 100\% accuracy.
On encyclopedia content at $M = 500$: DEEP-INDEXED uses median 100,597 tokens (IQR: 7,232--102,920), with 57\% of trials hitting the budget ceiling.
The $53\times$ token inflation is entirely attributable to the parametric pathway hijacking the retrieval protocol.

\section{Related Work}\label{sec:related}

\paragraph{Transformer complexity.}
The formal foundations rest on three results.
\citet{feng2023mystery} establish that constant-depth transformers without chain-of-thought compute exactly $\mathrm{TC}^0$.
\citet{merrill2024expressive} show that polynomial-length CoT extends this to $\mathrm{P}$.
\citet{li2025computational} prove the WINDOW = SPACE equivalence that grounds our bottleneck property.
These results characterize \emph{what transformers can compute}.
The Library Theorem asks a different question: \emph{how efficiently can they retrieve?}
\citet{tomlinson2026bapo} and \citet{amiri2025lower} prove CoT step lower bounds (computation-bound floors); the Library Theorem proves retrieval-bound floors.
A complete theory of reasoning cost requires both.

\paragraph{I/O complexity.}
The I/O complexity model of \citet{aggarwal1988input} separates fast internal memory from slow external memory and counts block transfers.
The B-tree of \citet{bayer1972organization} is optimal in this model: $\bigO(\log_B N)$ search in a store of $N$ blocks of size~$B$.
We identify the transformer context window with the I/O page, so that $C$ plays the role of~$B$.
This identification imports B-tree optimality results directly into the analysis of transformer retrieval cost and provides a formal explanation for the empirical advantage of tool-augmented agents, which the expressivity hierarchy alone ($\mathrm{TC}^0 \subset \mathrm{P}$) cannot account for.

\paragraph{Neural external memory.}
The Neural Turing Machine \citep{graves2014neural} and Differentiable Neural Computer \citep{graves2016hybrid} augment neural networks with external memory accessed through learned soft attention.
These systems implement approximate content-based addressing, a fundamentally different mechanism from the file system's exact string-equality lookup.
The Library Theorem's formalization is deliberately simpler: by assuming exact addressing, it isolates retrieval cost from addressing quality.
Whether approximate addressing can achieve B-tree navigation with sufficient reliability is an empirical question that our model does not address.

MemGPT \citep{packer2023memgpt} implements a virtual memory hierarchy for LLMs, managing context overflow with page-in/page-out operations.
This engineering system approximates the indexed access model; Theorem~\ref{thm:reasoning} predicts that deeper hierarchies would yield compounding gains.

\paragraph{Agent theory.}
\citet{schuurmans2023memory} proves that LLMs with unbounded external memory are Turing-complete, establishing what is \emph{possible}.
The Library Theorem adds the efficiency dimension: among Turing-complete systems, the organization of memory determines retrieval cost.
\citet{meyerson2025asymptotic} call for complexity analysis of agent systems treating the inference call as a primitive; we provide this analysis for the retrieval component.
\citet{sumers2024cognitive} propose a cognitive architecture (CoALA) separating working memory from long-term memory; the Library Theorem formalizes the interface, showing that indexed versus flat organization of long-term memory determines whether retrieval is logarithmic or linear.

\citet{kambhampati2024position} argue that LLMs do not reason but serve as components in modular frameworks (LLM-Modulo).
The Library Theorem is compatible with this view.
Even under the interpretation that each inference step is approximate retrieval from a training distribution, the quality of the conditioning context determines output quality, and indexed access to reasoning history provides exponentially richer conditioning than sequential scanning.
The parametric memory competition finding (\S\ref{sec:parametric}) provides direct empirical support for modular decomposition: the model's semantic understanding is valuable for some operations (index construction) but actively harmful for others (index traversal).

\paragraph{Interactive computation.}
\citet{wegner1998interactive} argues that interaction is more expressive than classical algorithmic computation, and \citet{goldin2008interactive} develop the thesis that the Church--Turing barrier falls for persistent interactive systems.
An inscription agent---a transformer that reads from, writes to, and navigates an indexed file system across multiple inference steps---is an instance of a persistent interactive machine.
The Library Theorem adds an efficiency dimension that the interactive computation framework leaves open: among persistent machines with identical computational power, the organization of the persistent store determines retrieval cost, and the separation is exponential.

\paragraph{RAG.}
RAG \citep{lewis2020rag} is the most widely deployed instance of indexed external memory for transformers.
In our framework, RAG implements the static search case: Theorem~\ref{thm:library} applies to a pre-existing corpus with a pre-built index.
Its empirical success across knowledge-intensive tasks constitutes indirect evidence for the formal separation.
\citet{gao2023ragsurvey} survey the resulting literature comprehensively.

The dominant retrieval mechanism in practice differs from the file system model in one important respect.
DPR \citep{karpukhin2020dpr} established the dense retrieval paradigm: queries and passages are encoded by learned bi-encoders, and retrieval is approximate nearest-neighbour search.
\citet{velickovic2025softmax} prove that softmax attention cannot maintain sharp selection as the retrieval database grows, precisely the regime in which the Library Theorem predicts the largest separation.
The Library Theorem's exact-addressing model corresponds to sparse retrieval systems (BM25, structured databases, file systems) and establishes the ceiling that dense approaches approximate.

RETRO \citep{borgeaud2022retro} demonstrates retrieval at the extreme: a 2-trillion-token external datastore, with a 7B-parameter model matching GPT-3 (175B parameters).
This calibrates the scale at which efficient retrieval infrastructure becomes necessary---the regime the Library Theorem characterises.

The Library Theorem predicts a next architectural step that RAG does not address: applying indexed access to self-generated reasoning state (Theorem~\ref{thm:reasoning}).
Self-RAG \citep{asai2023selfrag} is the closest existing system to this dynamic regime.
Theorem~\ref{thm:reasoning} predicts that indexed storage of self-generated state should compound with reasoning depth; Self-RAG's selective retrieval is a step toward this, but without indexed organisation of the agent's own history the quadratic cost remains.

\paragraph{RASP and transduction.}
\citet{weiss2021thinking} and \citet{strobl2024formal} develop the RASP framework for analyzing transformer computations as formal transductions.
A RASP construction for B-tree navigation would strengthen the feasibility claim in Theorem~\ref{thm:indexed}.
We leave this as future work, noting that \citet{bhattamishra2024separations} demonstrate that index-lookup operations are feasible in one-layer transformers of logarithmic width.

\section{Discussion}\label{sec:discussion}

\subsection{The file system as architectural principle}

A file system transforms quadratic-scaling reasoning into log-linear reasoning.
By ``file system'' we mean something precise and minimal: a partial function from bounded-length names to bounded-length contents, supporting exact lookup by string equality.
No embedding similarity, no learned attention over memory, no soft matching.
The transformer spells a filename; the system returns the page.

The simplicity of the mechanism contrasts with the magnitude of the separation.
\citet{bayer1972organization} showed that organizing data into a B-tree reduces search from linear to logarithmic, and every file system and database index in use today descends from this observation.
The Library Theorem identifies that transformer-based agents inhabit exactly this setting: the context window is a page, each inference step is an I/O operation, and the classical separation applies with full force to reasoning cost.

The practical implication is concrete.
Tool-augmented agents already issue \texttt{read}, \texttt{write}, \texttt{ls} commands as part of their reasoning loops.
The theorem says this interaction is not a convenience but the mechanism by which they escape the quadratic cost of scanning flat context.
An agent that stores intermediate results as named files and retrieves them by name is performing B-tree search over its own reasoning history.

\subsection{Capacity versus learning}

Every formal result in transformer complexity theory characterizes the ceiling of what the architecture permits, not the floor of what trained models achieve.
The Library Theorem follows this convention: it establishes that the model class \emph{admits} logarithmic retrieval; whether any particular trained model \emph{realizes} it is empirical.

Two observations narrow the gap.
First, B-tree navigation requires only comparison of a target key against $b$ index entries, selection of the correct child, and output of a filename string.
\citet{bhattamishra2024separations} show that index lookup is feasible in one-layer transformers of logarithmic width.
Second, the gap between capacity and learning is narrower for retrieval than for computation.
A model need not discover the B-tree algorithm from scratch; it need only generate the correct filename in response to a page of index entries, a task closer to instruction following than to algorithmic reasoning.

The harder question, raised by \citet{hahn2024sensitivity}, concerns index \emph{construction} rather than navigation.
Building an effective index over one's own reasoning state requires deciding what to name, how to organize, and when to restructure.
The Library Theorem assumes the index exists and is maintained at $\bigO(\log_b N)$ cost per insertion.
Whether models can learn to construct good indices is important but separate; the parametric memory competition finding suggests, however, that construction is precisely where language models' semantic capabilities should be applied.

\subsection{The construction--traversal separation}\label{sec:separation}

The three-experiment comparison yields a design principle that sharpens the capacity-versus-learning distinction.

Index \emph{construction} requires understanding: deciding that a document about ``acetylene'' belongs between ``acetal'' and ``acid,'' or that a code function should be indexed under both its name and its semantic purpose.
This is exactly what language models do well.
Semantic processing helps build good indices.

Index \emph{traversal} requires discipline: reading the TOC entry, extracting the page number, issuing the tool call, reading the result, submitting the value.
This is a deterministic protocol.
Semantic processing does not help and, as the encyclopedia experiment demonstrates, actively hurts: the model's understanding of the content creates a competing pathway that bypasses the protocol.

The implication is that agent memory systems should separate these two operations:
\begin{itemize}[nosep]
  \item \textbf{Construction}: Use language models to build, organize, and maintain the index. Semantic understanding is an asset.
  \item \textbf{Traversal}: Use deterministic algorithms to navigate the index. Spell the filename, get the page. No generation, no semantic processing, no opportunity for the parametric pathway to compete.
\end{itemize}

This separation mirrors the broader LLM-Modulo pattern \citep{kambhampati2024position}: use the model where its strengths apply (understanding, judgment, organization) and deterministic components where reliability is required (lookup, execution, verification).
The Library Theorem provides the formal motivation: the separation is not just good engineering practice but necessary to capture the full exponential advantage that indexed memory provides.

\subsection{Extended cognition}

The Library Theorem has a natural reading in the framework of extended cognition \citep{clark1998extended}: a transformer augmented with an indexed file system is not a transformer with a tool but a different cognitive system, one whose reasoning capacity is constituted in part by the external memory structure.
The parity principle holds directly---the file system plays the same functional role (enabling retrieval of relevant state) that a larger context window would play, and the theorem quantifies the exponential advantage this coupling provides.

This framing is stronger than the working memory analogy sometimes drawn between context windows and human short-term memory.
The mechanisms differ too deeply for that comparison to carry weight \citep{lake2017building}: human working memory involves attentional gating, rehearsal, and modality-specific subsystems with no transformer counterpart.
What the Library Theorem formalizes is closer to what \citet{goody1977domestication} and \citet{donald1991origins} describe as external symbolic storage---the observation that writing, lists, and indexes do not merely record prior thought but restructure what kinds of reasoning become tractable.
\citet{kirsh1994distinguishing} make the related point that creating an index during problem-solving is an epistemic action: it reduces subsequent computational cost rather than advancing directly toward the goal.

\citet{clark2025extended} extends the original thesis to generative AI, arguing that LLMs are candidates for extended cognitive coupling.
The Library Theorem provides formal content for this argument: the coupling between a bounded-capacity processor and an indexed external store yields an exponential separation that neither component achieves alone.

The parametric memory competition adds nuance to this picture.
The extended cognitive system works only when the coupling is disciplined---when the external store is actually consulted rather than bypassed.
A model that ``already knows'' the answer and ignores the index is not an extended cognitive system but a closed one, reverting to the parametric regime with all its limitations.
The construction--traversal separation (\S\ref{sec:separation}) is, in this framing, a condition on the coupling: the external store must be authoritative for the coupling to constitute genuine extension.

\subsection{Limitations}

Several aspects of the model deserve scrutiny.
The sequential access baseline (Definition~\ref{def:sequential}) is a worst-case characterization; real systems may exploit regularities in how state is ordered.
The B-tree assumes a total order on keys, which suffices for many retrieval tasks but not all; stores with complex relational structure may require richer index types.
The amortized analysis in Theorem~\ref{thm:reasoning} assumes that reads and writes alternate at a steady rate; bursty access patterns may have different cost profiles.

The experiments test two models (GPT-4o-mini and GPT-5.4) on a single task (key-value lookup).
The multi-model replication (\S\ref{sec:multimodel}) confirms that the separation holds across model generations.
However, the parametric memory competition may behave differently across model families or content domains.
GPT-5.4's superior in-context binary search demonstrates that model capability modulates the exploitation of structural regularities, but the exponential gap between sorted and indexed access persists.
The finding identifies a failure mode and its mechanism; the multi-model comparison establishes robustness but not universality.

We have not analyzed hash-based $\bigO(1)$ retrieval, which if feasible within $\mathrm{TC}^0$ would give an even larger separation than the B-tree.
This possibility \emph{strengthens} rather than weakens the indexed advantage; the B-tree analysis provides a lower bound on the gain from indexing.

\subsection{Three levels of external memory}

The results invite a taxonomy of agent architectures by memory organization:
\begin{enumerate}[nosep]
  \item \emph{Context-window only.} The agent reasons within a fixed $C$-token buffer. No external memory; retrieval cost is bounded by window size but so is capacity.
  \item \emph{Flat external memory.} The agent appends to and scans over an external store (conversation history, flat scratchpad, unstructured RAG corpus). Retrieval is $\Omega(N)$ per Theorem~\ref{thm:sequential}; reasoning accumulation is $\Theta(T^2)$.
  \item \emph{Indexed external memory.} The agent navigates a structured store via named pointers (file system, B-tree, database index). Retrieval is $\bigO(\log_b N)$; reasoning accumulation is $\bigO(T \log_b T)$.
\end{enumerate}
The Library Theorem quantifies the gap between levels~2 and~3. Most deployed agents operate at level~2 for self-generated state (flat chat history) while using level~3 for external knowledge (RAG). Theorem~\ref{thm:reasoning} predicts that extending indexed organization to the agent's own reasoning history should yield compounding gains---provided the traversal mechanism avoids the parametric competition trap identified in \S\ref{sec:parametric}.

\subsection{Directions}

The Library Theorem addresses a single agent's interaction with its own memory.
The natural extension is to multiple agents sharing an indexed store, where concurrent reads and writes introduce coordination costs not present in the single-agent case.

The construction--traversal separation suggests a concrete research direction: benchmarking index \emph{construction} by language models.
Can a model, given a stream of documents, build an index that a deterministic traversal algorithm can navigate efficiently?
The Library Theorem assumes the index exists; the quality of model-constructed indices, and the conditions under which construction quality degrades, are empirical questions the current experiments do not address.

More broadly, identifying the transformer context window with the I/O page opens the door to importing a larger body of classical results.
External-memory sorting bounds, cache-oblivious algorithms, and buffer management strategies all have potential analogues in the transformer reasoning setting.
The Library Theorem addresses the simplest case, search and retrieval, but the I/O complexity framework is rich, and the identification may prove fruitful beyond the results presented here.


\section*{Code Availability}
Code for all experiments is available at \url{https://github.com/zmainen/library-theorem}.

\section*{Acknowledgements}
I thank Gonzalo Polavieja for helpful corrections on the manuscript.

\end{document}